\renewcommand{\L}{\mathcal L} 
\renewcommand{\P}{\mathcal P} 
\newcommand{\ignore}[1]{}
\newsavebox{\savepar}
\newenvironment{bigboxit}{\begin{center}\begin{lrbox}{\savepar}
\begin{minipage}[h]{4.5in}
\sffamily
\begin{flushleft}}
{\end{flushleft}\end{minipage}\end{lrbox}\fbox{\usebox{\savepar}}
\end{center}}
\newtheorem{theorem}{Theorem}[section]
\title{Discrete Temporal Models of Social Networks}
\author{Steve Hanneke {\hskip 10mm} Wenjie Fu {\hskip 10mm} Eric P. Xing \\
School of Computer Science\\
Carnegie Mellon University\\
Pittsburgh, PA 15213 USA\\ 
\{shanneke,wenjief,epxing\}@cs.cmu.edu}
\begin{document}

\bibliographystyle{natbib}

\maketitle


\begin{abstract}
We propose a family of statistical models for social network evolution over time,
which represents an extension of Exponential Random Graph Models (ERGMs).  Many of the
methods for ERGMs are readily adapted for these models, including
maximum likelihood estimation algorithms.  We discuss models of this type and their properties, and
give examples, as well as a demonstration of their use for hypothesis testing and classification. We believe our temporal ERG models represent a useful new framework for modeling time-evolving social networks, and rewiring networks from other domains such as gene regulation circuitry, and communication networks.   
\end{abstract}

\section{Introduction}

The field of social network analysis is concerned with populations of
\emph{actors}, interconnected by a set of \emph{relations} (e.g.,
friendship, communication, etc.).  These relationships can be
concisely described by directed graphs, with one vertex for each actor
and an edge for each relation between a pair of actors.  This network
representation of a population can provide insight into organizational
structures, social behavior patterns, emergence of global structure
from local dynamics, and a variety of other social phenomena.

There has been increasing demand for flexible statistical models of
social networks, for the purposes of scientific exploration and as a
basis for practical analysis and data mining tools.
The subject of modeling a static social network has been investigated
in some depth. For time-invariant networks, represented
as a single directed or undirected graph, a number of flexible
statistical models have been proposed, including the classic
Exponential Random Graph Models (ERGM) and
extensions~\citep{frank:86,wasserman:05,snijders:02,robins:05},
which are descriptive in nature, latent space models that aim
towards clustering and community discovery
\citep{handcock:07}, and mixed-membership block models for
role discovery \citep{airoldi:08}. Of particular relevance to this paper is the ERGM, 
which is particularly flexible in that it can be customized to capture a wide range of 
signature connectivity patterns in the network via user-specified functions 
representing their sufficient statistics.
Specifically, if $N$ is some representation of a social network, and
$\mathcal N$ is the set of all possible networks in this
representation, then the probability distribution function for any
ERGM can be written in the following general form.
\begin{equation*}
\P(N) = \frac{1}{Z({\boldsymbol \theta})}\exp\left\{{\boldsymbol \theta}^\prime {\mathbf u}(N)\right\}.
\end{equation*}
Here, ${\boldsymbol \theta} \in {\mathbb R}^k$, and ${\mathbf u} : {\mathcal N} \rightarrow {\mathbb R}^k$.
$Z({\boldsymbol \theta})$ is a normalization constant, which is typically intractable to compute.
The $\mathbf u$ function represents the sufficient statistics for the model, and, in a graphical modeling
interpretation, can be regarded as a vector of clique potentials.  The representation for $N$ can vary
widely, possibly including multiple relation types, valued or binary relations, symmetric or asymmetric
relations, and actor and relation attributes.  The most widely studied models of this form are for
single-relation social networks, in which case $N$ is generally taken to be the weight matrix $A$ for the
network (sometimes referred to as a \emph{sociomatrix}), where $A_{ij}$ is the strength of directed
relation between the $i^{th}$ actor and $j^{th}$ actor.

Often one is interested in modeling the evolution of a network over multiple sequential observations.
For example, one may wish to model the evolution of coauthorship networks in a specific community from year to year,
trends in the evolution of the World Wide Web,
or a process by which simple local relationship dynamics give rise to global structure.
In such dynamic settings, where a
time-series of observations of the network structure is available,
several formalisms have been proposed to model the dynamics of
topological changes of such networks over time. For example, \citet{snijders:06} has proposed a continuous-time 
model of network dynamics, where each observed event represents a single 
actor altering his or her outgoing links to optimize an objective function
based on local neighborhood statistics.  \citet{robins:01} have indepedently 
studied a family of models of network dynamics over discrete time steps, 
quite similar to those presented below; in some sense, the present work 
can be viewed as a further exploration of these models, their properties 
and uses.  However, this exploration goes beyond the \citep{robins:01} work, in that
we explore the statistical properties of these models, such as their (non)degeneracy
tendencies, and the quality of fit that these models achieve when applied to real 
network time series data; such properties have not previously been systematically 
investigated for these types of models, though related work has recently been done 
on static ERGMs~\citep{handcock:03}.  We also explore algorithmic issues in 
calculating the MLE estimators and performing hypothesis tests with these models.
Furthermore, we feel that the added flexibility in the parametrization of these models 
below makes them somewhat easier to specify and work with, compared to the 
description in \citep{robins:01}, which although quite elegant, requires the sufficient statistics
to be nondecreasing in the relation indicator variables for the network.

In the following sections, we propose a model family we would like to refer to as temporal ERGM, or TERGM, that is capable of modeling network evolution, while maintaining the flexibility of a fully general ERGM.
Furthermore, these models build upon a generic ERGM formalism, so that
existing methods developed for ERGMs over the past two
decades are readily adapted to apply to the temporal models as
well. We prove that a very general subclass of the TERGM is nondegenerate and explain how to calculate their maximum likelihood estimates from network data. 
Furthermore we show that these models can indeed be fitted to capture signature dynamic properties of real world evolving networks, and can be applied in 
hypothesis testing, nodal classification, and other applications. 

\section{Discrete Temporal Models}

We begin by describing the basic form of the type of model we study.
Specifically, one way to simplify a statistical model for evolving social networks is to make a Markov assumption
on the network from one time step to the next.  Specifically, if $A^t$ is the weight matrix
representation of a single-relation social network at time $t$, then we might make the assumption that $A^t$ is independent
of $A^1,\ldots,A^{t-2}$ given $A^{t-1}$.  Put another way, a sequence of network observations
$A^1,\ldots,A^t$ has the property that
\begin{equation*}
\P(A^2,A^3,\ldots,A^t|A^1) = \P(A^t|A^{t-1})\P(A^{t-1}|A^{t-2})\cdots\P(A^2|A^1).
\end{equation*}
With this assumption in mind, we can now set about deciding what the form of the conditional
PDF $\P(A^t|A^{t-1})$ should be.  Given our Markov assumption, one natural way to generalize ERGMs for evolving networks is to
assume $A^t | A^{t-1}$ admits an ERGM representation.  That is, we can specify a function
${\boldsymbol \Psi}: {\mathbb R}_{n \times n}\times{\mathbb R}_{n \times n} \rightarrow {\mathbb R}^k$, which can be understood as a {\it temporal potential} over cliques across two time-adjancent networks, 
and parameter vector ${\boldsymbol \theta} \in {\mathbb R}^k$, such that the conditional PDF has the following form.
\begin{equation}
\P(A^t | A^{t-1}, {\boldsymbol \theta}) = \frac{1}{Z({\boldsymbol \theta},A^{t-1})} \exp\left\{{\boldsymbol \theta}^\prime {\boldsymbol \Psi}(A^t,A^{t-1})\right\}
\label{eqn:tergm}
\end{equation}

We refer to such a model as a TERGM, for Temporal Exponential Random Graph Model.
Note that specifying the joint distribution requires one to specify a distribution
over the first network: $A^1$.  This can generally be accomplished fairly naturally
using an ERGM.  For simplicity of presentation,
we avoid these details in subsequent sections by assuming the distribution over this
initial network is functionally independent of the parameter $\boldsymbol \theta$.

In particular, we will be especially interested in the special case of these models in which
\begin{equation}
{\boldsymbol \Psi}(A^t,A^{t-1}) = \sum_{ij} {\boldsymbol \Psi}_{ij}(A_{ij}^t,A^{t-1}).
\label{eqn:factor}
\end{equation}
This form of the temporal potential function represents situations where the conditional distribution of $A^t$ given $A^{t-1}$ factors over the entries $A_{ij}^t$ of $A^t$.  As we will see, such models possess a number of desirable properties.

\subsection{An Example}
\label{subsec:example}
To illustrate the expressiveness of this framework, we present the
following simple example model.  For simplicity, assume the weight matrix of the network is binary (i.e., an adjacency matrix).
Define the following statistics, which represent
\textit{density}, \textit{stability}, \textit{reciprocity}, and \textit{transitivity}, respectively.
\small{\begin{align*}
\Psi_D(A^t,A^{t-1}) & =  \frac{1}{(n\!-\!1)} \sum_{ij} A_{ij}^t \\
\Psi_S(A^t,A^{t-1}) & =  \frac{1}{(n\!-\!1)} \sum_{ij}\left[A_{ij}^t A_{ij}^{t-1}\!+\!(1\!-\!A_{ij}^t)(1\!-\!A_{ij}^{t-1})\right] \\
\Psi_R(A^t,A^{t-1}) & =  n \left[ \sum_{ij} A_{ji}^t A_{ij}^{t-1}\right] \biggm/ \left[ \sum_{ij} A_{ij}^{t-1} \right] \\
\Psi_T(A^t,A^{t-1}) & =  n \left[ \sum_{ijk} A_{ik}^t A_{ij}^{t-1} A_{jk}^{t-1} \right] \biggm/ \left[ \sum_{ijk} A_{ij}^{t-1} A_{jk}^{t-1} \right] \\
\end{align*}}\normalsize{The statistics are each scaled to a constant range (in this case $[0,n]$) to enhance interpretability of the model parameters.
The conditional probability mass function \eqref{eqn:tergm} is
governed by four parameters: $\theta_D$ controls the \emph{density}, or the number of ties in the network as a whole;
$\theta_S$ controls the \emph{stability}, or the tendency of a link that does (or does not) exist at time $t-1$ to continue
existing (or not existing) at time $t$;
$\theta_R$ controls the \emph{reciprocity}, or the tendency of a link from $i$ to $j$ to result in a link from $j$ to $i$ at the next time step;
and $\theta_T$ controls the \emph{transitivity}, or the tendency of a tie from $i$ to $j$ and from $j$ to $k$ to result
in a tie from $i$ to $k$ at the next time step. The transition probability for this temporal network model can then be written as follows.}
\begin{equation*}
\P(A^t | A^{t-1}, {\boldsymbol \theta}) = \frac{1}{Z({\boldsymbol \theta},A^{t-1})} \exp\left\{\sum_{j \in \{D,S,R,T\}}\theta_j \Psi_j(A^t,A^{t-1})\right\}
\end{equation*}

\subsection{More General Models}

For simplicity, we will only discuss the simple
models described above. However, one can clearly extend this
framework to allow multiple relations in the network, actor
attributes, relation attributes, longer-range Markov dependencies, or
a host of other possibilities.  In fact, many of the results below can
easily be generalized to deal with these types of extensions.

\section{Estimation}

The estimation task for models of the form~\eqref{eqn:tergm} is to use the sequence of observed
networks, $N^1,N^2,\ldots,N^T$, to find an estimator $\hat{\boldsymbol \theta}$
that is close to the actual parameter values $\boldsymbol \theta$ in some sensible metric.
As with ERGMs, in general the normalizing constant $Z$ could be computationally intractable, often making explicit solutions
of maximum likelihood estimation difficult.  However, general techniques for MCMC sampling to
enable approximate maximum likelihood estimation for ERGMs have been studied in some depth
and have proven successful for a variety of models \citep{snijders:02}.  By a slight
modification of these algorithms, we can apply the same general techniques as follows.

Let
\begin{equation}
\L({\boldsymbol \theta}; N^1,N^2,\ldots,N^T) = \log\P(N^{2},N^{3},\ldots,N^T | N^1,{\boldsymbol \theta} ),
\label{eqn:L}
\end{equation}
\begin{equation*}
{\mathbf M}(t,{\boldsymbol \theta}) = \mathbb{E}_{\boldsymbol \theta}\left[\Psi(\underline{\mathbf N}^t,N^{t-1})|N^{t-1}\right],
\end{equation*}
\begin{equation*}
{\mathbf C}(t,{\boldsymbol \theta}) = \mathbb{E}_{\boldsymbol \theta}\left[\Psi(\underline{\mathbf N}^t,N^{t-1}) \Psi(\underline{\mathbf N}^t,N^{t-1})^\prime |N^{t-1}\right].
\end{equation*}
where expectations are taken over the random variable $\underline{\mathbf N}^t$, the network at time $t$.  Note that
\begin{equation*}
\nabla \L({\boldsymbol \theta}; N^1,\ldots,N^T) = \sum_{t=2}^T \left(\Psi(N^t,N^{t-1}) - M(t,{\boldsymbol \theta})\right)
\end{equation*}
and
\begin{equation*}
\nabla^2 \L({\boldsymbol \theta}; N^1,\ldots,N^T) = \sum_{t=2}^T \left( M(t,{\boldsymbol \theta}) M(t,{\boldsymbol \theta})^\prime - C(t,{\boldsymbol \theta}) \right).
\end{equation*}
The expectations can be approximated by Gibbs sampling from the conditional distributions \citep{snijders:02},
so that we can perform an unconstrained optimization procedure akin to Newton's method: approximate the
expectations, update parameter values in the direction that increases \eqref{eqn:L}, repeat
until convergence. A related algorithm is described by \citep{geyer:92} for general exponential families, and
variations are given by \citep{snijders:02} that are tailored for ERG models.  The following is
a simple version of such an estimation algorithm.

\begin{bigboxit}
\noindent 1. Randomly initialize ${\boldsymbol \theta}^{(1)}$ \\
2. For $i = 1$ up until convergence \\
3. \;\;  For $t = 2, 3,\ldots, T$ \\
4. \;\;\; \; Sample $\hat{N}_{(i)}^{t,1},\ldots,\hat{N}_{(i)}^{t,B} \sim \P(\underline{\mathbf N}^t | N^{t-1}, {\boldsymbol \theta}^{(i)})$ \\
5. \;\;\;\; \; $\hat{\mu}_{(i)}^t = \frac{1}{B}\sum_{b=1}^B {\boldsymbol \Psi}(\hat{N}_{(i)}^{t,b},N^{t-1})$\\
6. \;\;\;\; \; $\hat{C}_{(i)}^t = \frac{1}{B}\sum_{b=1}^B {\boldsymbol \Psi}(\hat{N}_{(i)}^{t,b},N^{t-1}) {\boldsymbol \Psi}(\hat{N}_{(i)}^{t,b},N^{t-1})^\prime$\\
7. \;\; $\hat{H}_{(i)} = \sum_{t=2}^T [\hat{\mu}_{(i)}^t \hat{\mu}_{(i)}^{t\prime} - \hat{C}_{(i)}^t]$\\
8. \;\; ${\boldsymbol \theta}^{(i+1)} \leftarrow {\boldsymbol \theta}^{(i)} - \hat{H}_{(i)}^{-1} \sum_{t=2}^T\left[{\boldsymbol \Psi}(N^t,N^{t-1}) - \hat{\mu}_{(i)}^t\right]$
\end{bigboxit}

\normalsize{The choice of $B$ can affect the convergence of this algorithm.  Generally, larger $B$ values will give more accurate
updates, and thus fewer iterations needed until convergence. However, in the early stages of the algorithm, precise updates might
not be necessary if the likelihood function is sufficiently smooth, so that a $B$ that grows larger only when more precision is
needed may be appropriate.
If computational resources are limited, it is possible (though less certain) that the algorithm might
still converge even for small $B$ values (see \citep{carreira:05} for an alternative approach to sampling-based MLE, which seems
to remain effective for small $B$ values).}

\subsection{Product Transition Probabilities}

Although the general case~\eqref{eqn:tergm} may often require a
sampling-based estimation procedure such as that given above, it turns
out that the special case of~\eqref{eqn:factor} does not.  In this
case, as long as the ${\boldsymbol \Psi}_{ij}$ functions are
computationally tractable, we can tractably perform \emph{exact}
updates in Newton's method, rather than approximating them with
sampling.


\subsection{Evalutation of Parameter Recovery}

To examine the convergence rate empirically, we display in Figure~\ref{fig:simulation}
the convergence of this algorithm on data generated from the example model given in Section~\ref{subsec:example}.
The simulated data is generated by sampling from the example model with randomly generated $\boldsymbol \theta$,
and the loss is plotted in terms of Euclidean distance of the estimator from the true parameters.
To generate the initial $N^1$ network, we sample from the pmf
$\frac{1}{Z({\boldsymbol \theta})}\exp\{{\boldsymbol \theta}^\prime {\boldsymbol \Psi}(N^1,N^1)\}$.
The number of actors $n$ is 100.
The parameters are initialized uniformly in the range $[0,10)$, except for $\theta_D$, which is initialized
to $-5\theta_S - 5\theta_R - 5\theta_T$.  This tends to generate networks with reasonable densities.  The
results in Figure~\ref{fig:simulation} represent averages over 10 random initial configurations of the parameters
and data.  In the estimation algorithm used, $B=100$, but increases to $1000$ when the Euclidean distance between
parameter estimates from the previous two iterations is less than $1$.  Convergence is defined as the Euclidean
distance between ${\boldsymbol \theta}^{(i+1)}$ and ${\boldsymbol \theta}^{(i)}$ being within $0.1$.
Since this particular model is simple enough for exact
calculation of the likelihood and derivatives thereof (see above), we also compare against Newton's method with exact updates (rather
than sampling-based).  We can use this to determine how much of the loss is due to the approximations being
performed, and how much of it is intrinsic to the estimation problem.  The parameters returned by the
sampling-based approximation are usually almost identical to the MLE obtained by Newton's method,
and this behavior manifests itself in Figure~\ref{fig:simulation} by the losses being visually indistinguishable.

\begin{figure}
\setlength{\epsfxsize}{3.5in}
\center{\epsfbox{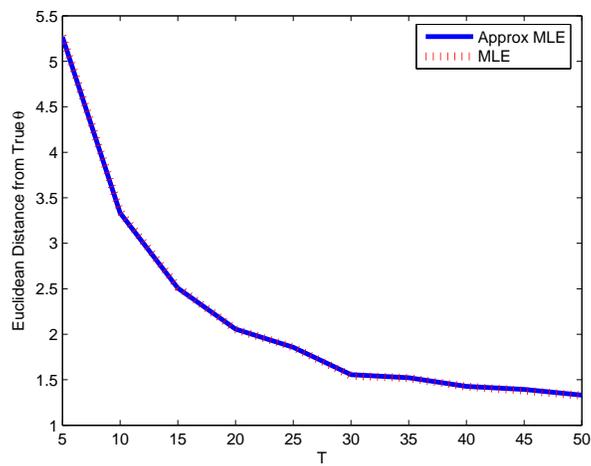}}
\caption{Convergence of estimation algorithm on simulated data, measured in Euclidean distance
of the estimated values from the true parameter values.  The approximate MLE
from the sampling-based algorithm is almost identical to the MLE obtained by direct
optimization.}
\label{fig:simulation}
\end{figure}

\section{(Non)Degeneracy of Temporal ERGMs}

There has been much concern expressed in the literature over certain degeneracy issues
that arise when working with Exponential Random Graph Models.
Specifically, Handcock~\citep{handcock:03} has recently been exploring
the fact that, for many ERGMs, most of the parameter space is
populated by distributions that place almost all of the probability
mass on a small number of networks (typically the complete or empty
graphs).  This leads to several negative effects.  For instance, since
we do not expect the true generating distribution to be degenerate,
the prevalence of these degenerate distributions intuitively reflects
a mismatch between the model and the type of process we wish to
capture with it.  Additionally, it is often the case that degenerate
distributions can be found very close to any nondegenerate
distribution, so that slight variations in the parameters cause the
distribution to become degenerate.  This also leads to another
problem, namely inference degeneracy.  Even if the generating
distribution is modeled by some parameter values with a nondegenerate
distribution, the degeneracy of nearby distributions may prevent
commonly used maximum likelihood estimation techniques from converging
to it within a reasonable sample size; specifically, the aforementioned MCMC techniques may require an
impractically large number of samples, or may even fail to work at
all~\citep{handcock:03}.

One natural question to ask is whether such issues also affect these
temporal extensions.  In the simple case where the transition
distribution factors over the edges, as in~\eqref{eqn:factor}, it
turns out these models avoid such problems entirely.

\subsection{Nondegeneracy of the Example Model}

To keep the initial explanation of this phenomenon as simple as possible, we begin
this discussion by looking at the special case of the example model from Section~\ref{subsec:example}.

For any given entry $A_{ij}^t$, the networks $A^{t-1}$ that minimize
and maximize the probability that $A_{ij}^t=1$ are the empty graph and
complete graph; which one maximizes and which one minimizes it depends
on the parameter values.  If $A^{t-1}$ is the empty graph, then
\[\mathbb{P}(A_{ij}^t = 1 | A^{t-1}) = \frac{exp\{\theta_{D}/(n-1)\}}{exp\{\theta_{D}/(n-1)\}+exp\{\theta_S/(n-1)\}}.\]
Under $A^{t-1}$ as the complete graph, it is
\[\mathbb{P}(A_{ij}^t = 1 | A^{t-1}) = \frac{exp\{(\theta_{D} + \theta_{S} + \theta_{T} + \theta_{R})/(n-1)\}}{exp\{(\theta_{D} + \theta_{S}+\theta_{T}+\theta_{R})/(n-1)\}+1}.\]

So the entropy is lower bounded as follows:
\[H(A^{t}) \geq \min\limits_{A^{t-1}} H(A^{t} | A^{t-1}) = \min\limits_{A^{t-1}} \sum_{ij} H(A_{ij}^{t} | A^{t-1}) \geq \sum_{ij} \min\limits_{A^{t-1}} H(A_{ij}^t | A^{t-1}).\]
We can lower bound $\min\limits_{A^{t-1}} H(A_{ij}^t | A^{t-1})$ by the quantity
\[p \ln \frac{1}{p} + (1-p)\ln \frac{1}{1-p},\]
where $p=$\\
\[\frac{exp\{(|\theta_D|\!+\!|\theta_S|\!+\!|\theta_R|\!+\!|\theta_T|)/(n\!-\!1)\}}{exp\{\!(|\theta_D|\!+\!|\theta_S|\!+\!|\theta_R|\!+\!|\theta_T|)/(n\!-\!1)\!\}\!+\!exp\{\!-(|\theta_D|\!+\!|\theta_S|\!+\!|\theta_R|\!+\!|\theta_T|)/(n\!-\!1)\!\}}\]
\[= \frac{1}{exp\{-2(|\theta_D|+|\theta_S|+|\theta_R|+|\theta_T|)/(n-1)\}+1}.\]
($p$ is an upper bound on $P(A_{ij}^t=1|A^{t-1})$, and $1-p$ is a lower bound on it).
So the entropy lower bound is at most $n(n-1)(p \ln (1/p) +  (1-p) \ln (1/(1-p)))$,
and thus as long as $|\theta_D| + |\theta_S|+|\theta_R|+|\theta_T|$ is not too large,
the entropy is guaranteed to be reasonably large.

Other than the entropy, we can get a somewhat more intuitive grasp of
this type of nondegeneracy result by bounding the expected number of
nonzero entries in $A^t$.  In particular, a consequence of the above
reasoning is that the expected number of nonzero entries in $A^t$ is
at most
\[n(n-1)\frac{1}{exp\{-2(|\theta_D|+|\theta_S|+|\theta_R|+|\theta_T|)/(n-1)\}+1},\]
and is at least
\[n(n-1)\frac{1}{exp\{2(|\theta_D|+|\theta_S|+|\theta_R|+|\theta_T|)/(n-1)\}+1}.\]
So again, as long as $|\theta_D|+|\theta_S|+|\theta_R|+|\theta_T|$ is not too large,
we are guaranteed a reasonable expected number of nonzero entries in $A^t$.

To give an example of the types of entropy values one gets from a
model of this type, in Figure~\ref{fig:entropy} we plot the exact
entropy values for the example model as a function of $\theta_D$ and
$\theta_S$ (with $\theta_R = \theta_T = 0$ to make a two-dimensional
plot possible), and as a function of $\theta_D$ and $\theta_T$ (with
$\theta_R = \theta_S = 0$); other options, such as fixing the unused
parameters to nonzero values, yield similar plots.  Specifically, the
plotted values are the entropy of $A^2$, where each $A^{i}$ is a $n
\times n$ matrix (where $n=7$ in the left plot, and $n=6$ in the right
plot), and $A^1$ is sampled from the basic Bernoulli graph model, in
which each entry $A_{ij}^1$ is an independent Bernoulli random
variable with probability of being $1$ equal to $0.25$.

It is worth briefly mentioning how these plots are generated. Since it
is not computationally tractable to enumerate all graphs for each
parameter setting, we instead calculate it for equivalence classes of
graphs which can be analytically shown to have identical probability
values, and weight each class according to its size in the entropy
calculation.  For the first plot, since the conditional probability of
$A^2$ given $A^1$ is only a function of how many edges are present in
$A^2$ and how many $ij$ values have $A^2_{ij} = A^1_{ij}$, and since
the edges of $A^1$ are exchangeable, we can write the marginal
distribution of $A^2$ purely in terms of the number of edges.  Thus we
need only calculate $n(n-1)$ probability values, and the entropy is a
weighted sum, where the weights are combinatorial quantities
reflecting the number of graphs with that many edges.  For the second
plot, the situation is more complex but the idea is similar.  In this
case, we define the equivalence classes based purely on graph
isomorphisms.  The number of distinct isomorphic networks of six nodes
is 156, a significant reduction from the total number of networks,
rendering the calculation of entropy computationally tractable.

In both plots, small magnitudes of the parameters give distributions
with high entropy, as predicted.

\begin{figure}
\setlength{\epsfxsize}{2.5in}
\epsfbox{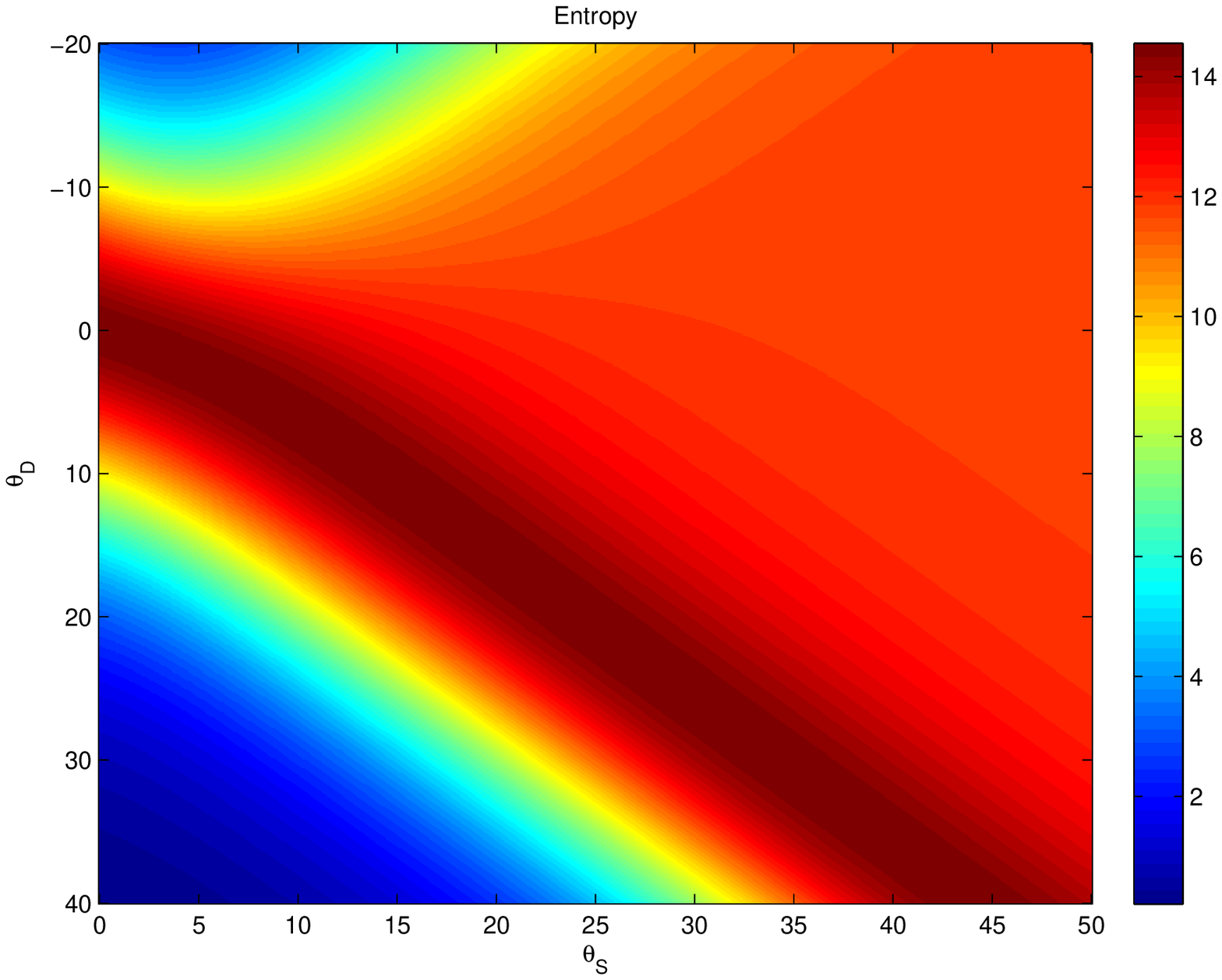}
\setlength{\epsfxsize}{2.5in}
\epsfbox{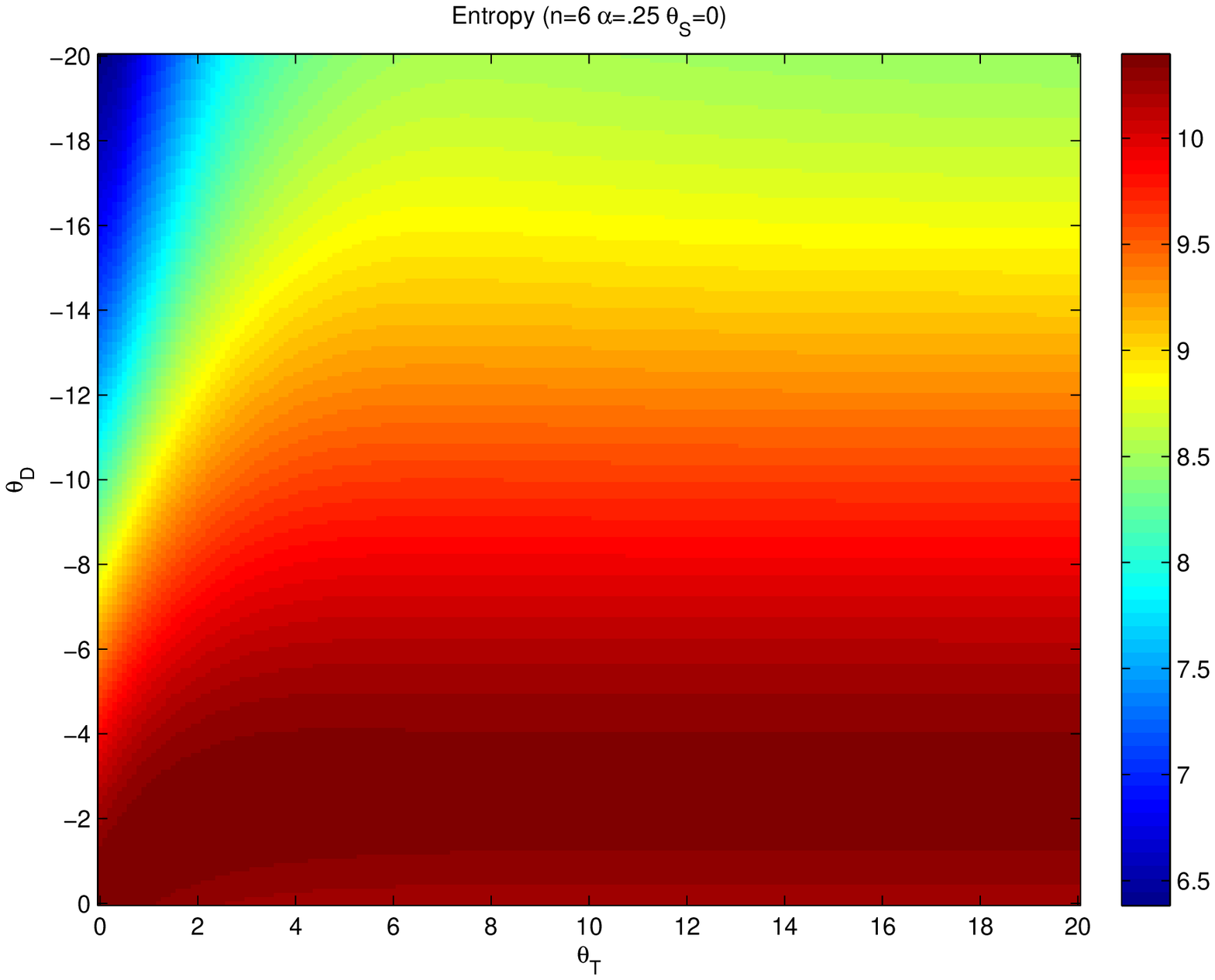}
\caption{Entropy plots for the example model. In both plots, small magnitudes of the parameters give distributions with high entropy, as predicted.}
\label{fig:entropy}
\end{figure}

\subsection{Nondegeneracy Under General Product Transitions}

We can generalize the preceding discussion beyond the simple example
model, by considering general transition distributions that factor
over entries of $A^t_{ij}$ as follows.
Suppose $\{\Psi_{k}(A^t,A^{t-1})\}_k$ is a sequence of functions such
that $\Psi_{k}(A^t,A^{t-1})$ $= \sum_{ij} \Psi_{ijk}(A_{ij}^t,A^{t-1})$
(i.e., satisfying~\eqref{eqn:factor}), where each $\Psi_{ijk}$ has
range contained in $[-\beta,\beta]$ for some $\beta > 0$, so that the
range of $\Psi_k$ is in $[-n(n-1)\beta,n(n-1)\beta]$.
Then we consider transition models of the form~\eqref{eqn:tergm}, with
these $\Psi$ values.  That is,
\begin{equation}
\P(A^t | A^{t-1}, {\boldsymbol \theta}) = \frac{1}{Z({\boldsymbol \theta},A^{t-1})} \exp\left\{\sum_{k}\theta_k \Psi_k(A^t,A^{t-1})\right\}.
\label{eqn:super-factor}
\end{equation}
Note that these models factor over the entries $A_{ij}^t$ given $A^{t-1}$.

\begin{theorem}
Let \[p = \frac{1}{exp\{2\beta\sum_{k}|\theta_k|\}+1}.\]
For models of the form~\eqref{eqn:super-factor},  
the expected number of nonzero entries in $A^t$ is in the range
\[\left[n(n-1) p , n(n-1)(1-p)\right],\]
and the entropy can be lower bounded as
\[H(A^t) \geq n(n-1)\left(p \log \frac{1}{p} + (1-p) \log \frac{1}{1-p}\right).\]
In particular, as long as $\sum_{k}|\theta_k|$ is not too large, this
bound implies the entropy will be reasonably large.
\end{theorem}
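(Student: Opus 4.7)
The plan is to reduce the theorem to a per-entry Bernoulli analysis by exploiting the factorization hypothesis \eqref{eqn:factor}, and then to use the uniform bound $|\Psi_{ijk}|\leq\beta$ to clamp each entry's conditional success probability into $[p,1-p]$, from which both claims follow by standard computations.

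First I would substitute the factorization $\Psi_k(A^t,A^{t-1})=\sum_{ij}\Psi_{ijk}(A^t_{ij},A^{t-1})$ into the numerator of \eqref{eqn:super-factor} and observe that the exponent splits as a sum over $(i,j)$. This makes the partition function factor as well, so that $A^t$ conditional on $A^{t-1}$ is a product of independent Bernoulli variables, each with log-odds $\sum_k \theta_k[\Psi_{ijk}(1,A^{t-1}) - \Psi_{ijk}(0,A^{t-1})]$. Since each $\Psi_{ijk}$ takes values in $[-\beta,\beta]$, each difference lies in $[-2\beta,2\beta]$, so the magnitude of this log-odds is at most $2\beta\sum_k|\theta_k|$, and applying the sigmoid yields $\P(A^t_{ij}=1\mid A^{t-1})\in[p,1-p]$ uniformly in $A^{t-1}$ and $(i,j)$.

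From this clamp the theorem is essentially immediate. The expected-count claim follows from linearity of expectation summed over the $n(n-1)$ off-diagonal entries, since each $\mathbb{E}[A^t_{ij}] = \mathbb{E}[\P(A^t_{ij}=1\mid A^{t-1})]$ is pinned in $[p,1-p]$. For the entropy, I would chain: $H(A^t)\geq H(A^t\mid A^{t-1})$ because conditioning reduces entropy, then use conditional independence of the entries given $A^{t-1}$ to split $H(A^t\mid A^{t-1}=a)=\sum_{ij}H(A^t_{ij}\mid A^{t-1}=a)$ for every fixed $a$, and finally invoke the fact that the binary entropy function $h(q)=q\log(1/q)+(1-q)\log(1/(1-q))$ is concave and symmetric about $1/2$ to conclude $h(q)\geq h(p)$ whenever $q\in[p,1-p]$. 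Summing over the $n(n-1)$ entries gives the claimed lower bound.

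The argument is almost entirely bookkeeping; I do not anticipate a real obstacle. The step that most deserves care is the first one: verifying that the additive factorization of the sufficient statistic genuinely passes through the normalizer $Z({\boldsymbol\theta},A^{t-1})$ to produce true conditional independence of the $A^t_{ij}$, and not just a weaker marginal factorization. Once that is in place, the logistic clamp and the concavity of the binary entropy function do all the work, and the form of $p$ in the statement is recovered simply by reading off the worst case of the log-odds bound.
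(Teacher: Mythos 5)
Your proposal is correct and follows essentially the same route as the paper's own proof sketch: bound each entry's conditional success probability in $[p,1-p]$ via the $[-\beta,\beta]$ range of the $\Psi_{ijk}$, then get the expected edge count by linearity and the entropy bound via $H(A^t)\geq H(A^t\mid A^{t-1})$ plus the per-edge factorization. You simply make explicit a few steps the paper leaves implicit (the log-odds computation and the concavity/symmetry argument for the binary entropy function), which is fine.
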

\begin{proof}[Sketch]
As above, we can upper bound the probability of a particular entry $A_{ij}^t$ taking value $1$, given $A^{t-1}$, by
\[\frac{1}{exp\{-2\beta \sum_{k}|\theta_k|\}+1},\]
and lower bound it by
\[\frac{1}{exp\{2\beta \sum_k |\theta_k|\}+1}.\]
Since the conditional distribution given $A^{t-1}$ factors over the
edges of $A^t$, the expected number of edges given $A^{t-1}$ is in
this range, multiplied by $n(n-1)$.  Since these bounds are
independent of $A^{t-1}$, they also hold for the expectation under the
marginal distribution of $A^t$.  Similarly, as before we can lower
bound the entropy under the marginal distribution of $A^t$ as
\[H(A^t) \geq \sum_{ij} \min_{A^{t-1}} H(A_{ij}^t | A^{t-1}),\]
and due to the aforementioned bounds on the conditional of $A_{ij}^t$,
the quantity $H(A_{ij}^t | A^{t-1})$ is at least
\[p \ln \frac{1}{p} + (1-p) \ln \frac{1}{1-p}.\] 
\end{proof}

\section{Hypothesis Testing: A Case Study}
\label{sec:testing}

As an example of how models of this type might be used in practice,
we present a simple hypothesis testing application.  Here we see the
generality of this framework pay off, as we can use models of this
type to represent a broad range of scientific hypotheses.  The general
approach to hypothesis testing in this framework is first to write
down potential functions representing transitions one expects to
be of some significance in a given population, next to write down
potential functions representing the usual ``background'' processes
(to serve as a null hypothesis), and third to plug these potentials into
the model, calculate a test statistic, and compute a p-value.

The data involved in this example come from the United States
108$^{th}$ Senate, having $n=100$ actors.  Every time a proposal is
made in the Senate, be it a bill, amendment, resolution, etc., a
single Senator serves as the proposal's \emph{sponsor} and there may
possibly be several \emph{cosponsors}.  Given records of all proposals
voted on in the full Senate, we create a sliding window of 100
consecutive proposals.  For a particular placement of the window, we
define a binary directed relation existing between two Senators if and
only if one of them is a sponsor and the other a cosponsor for the
same proposal within that window (where the direction is toward the
sponsor).  The data is then taken as evenly spaced snapshots of this
sliding window, $A^1$ being the adjacency matrix for the first 100
proposals, $A^2$ for proposal 31 through 130, and so on shifting the
window by 30 proposals each time\ignore{\footnote{This overlap will artificially inflate the stability parameter.
However, that parameter value is not essential for this test.
The benefit of having a window size (100) larger than the step size (30) is that it prevents having too small of a step size from having a dramatic influence.
If we were interested in more accurately modeling the intuitive notion of stability, one option would be to take greater care in selecting the step size so that
window size need not be larger than the step size;
another option would be to use a higher-order model (i.e., $K>1$) for the reciprocity potentials.}}.  In total,
there are 14 observed networks in this series, corresponding to the
first 490 proposals addressed in the 108$^{th}$ Senate.

In this study, we propose to test the hypothesis that intraparty
reciprocity is inherently stronger than interparty reciprocity.  To
formalize this, we use a model similar to the example given
previously.  The main difference is the addition of party membership
indicator variables.  Let $P_{ij} = 1$ if the $i^{th}$ and $j^{th}$
actors are in the same political party, and 0 otherwise, and let
$\bar{P}_{ij} = 1-P_{ij}$.  Define the following potential functions,
representing \emph{stability}, \emph{intraparty density},
\emph{interparty density},\footnote{We split density to intra- and
inter-party terms so as to factor out the effects on reciprocity of
having higher intraparty density.}
\emph{overall reciprocity}, \emph{intraparty reciprocity}, and \emph{interparty reciprocity}.

\small{\begin{align*}
& \Psi_S(A^t,A^{t-1}) & = & \frac{1}{(n\!-\!1)} \sum_{ij}\left[A_{ij}^t A_{ij}^{t-1}\!+\!(1\!-\!A_{ij}^t)(1\!-\!A_{ij}^{t-1})\right] \\
& \Psi_{WD}(A^t,A^{t-1}) & = & \frac{1}{(n\!-\!1)} \sum_{ij} A_{ij}^t P_{ij} \\
& \Psi_{BD}(A^t,A^{t-1}) & = & \frac{1}{(n\!-\!1)} \sum_{ij} A_{ij}^t \bar{P}_{ij} \\
& \Psi_R(A^t,A^{t-1}) & = & n \left[ \sum_{ij} A_{ji}^t A_{ij}^{t-1}\right] \biggm/ \left[ \sum_{ij} A_{ij}^{t-1} \right] \\
& \Psi_{WR}(A^t,A^{t-1}) & = & n \left[ \sum_{ij} A_{ji}^t A_{ij}^{t-1} P_{ij}\right] \biggm/ \left[ \sum_{ij} A_{ij}^{t-1} P_{ij} \right] \\
& \Psi_{BR}(A^t,A^{t-1}) & = & n \left[ \sum_{ij} A_{ji}^t A_{ij}^{t-1} \bar{P}_{ij}\right] \biggm/ \left[ \sum_{ij} A_{ij}^{t-1} \bar{P}_{ij}\right] \\
\end{align*}}

\normalsize{The null hypothesis supposes that the reciprocity observed in this data is the result of
an overall tendency toward reciprocity amongst the Senators, regardless of party.
The alternative hypothesis supposes that there is a stronger tendency toward reciprocity
among Senators within the same party than among Senators from different parties.
Formally, the transition probability for the null hypothesis can be written as }

\begin{equation*}
\P_0(A^t | A^{t-1}, {\boldsymbol \theta}^{(0)}) = \frac{1}{Z_0({\boldsymbol \theta}^{(0)},A^{t-1})} \exp\left\{\sum_{j \in \{S,WD,BD,R\}}\theta^{(0)}_j \Psi_j(A^t,A^{t-1})\right\},
\end{equation*}
while the transition probability for the alternative hypothesis can be written as
\begin{equation*}
\P_1(A^t | A^{t-1}, {\boldsymbol \theta}^{(1)}) = \frac{1}{Z_1({\boldsymbol \theta}^{(1)},A^{t-1})} \exp\left\{{\hskip -.8cm}\sum_{{\hskip .8cm}j \in \{S,WD,BD,WR,BR\}}{\hskip -.8cm}\theta^{(1)}_j \Psi_j(A^t,A^{t-1})\right\}.
\end{equation*}

For our test statistic, we use the likelihood ratio.
To compute this, we compute the maximum likelihood estimators for each of these models,
and take the ratio of the likelihoods.  For the null hypothesis, the MLE is
\[(\hat{\theta}^{(0)}_S = 336.2, \hat{\theta}^{(0)}_{WD} = -58.0, \hat{\theta}^{(0)}_{BD} = -95.0, \hat{\theta}^{(0)}_R = 4.7)\]
with likelihood value of $e^{-9094.46}$.
For the alternative hypothesis, the MLE is
\[(\hat{\theta}^{(1)}_S = 336.0, \hat{\theta}^{(1)}_{WD} = -58.8, \hat{\theta}^{(1)}_{BD} = -94.3,\hat{\theta}^{(1)}_{WR} = 4.2, \hat{\theta}^{(1)}_{BR} = 0.03)\]
with likelihood value of $e^{-9088.96}$.
The likelihood ratio statistic (null likelihood over alternative
likelihood) is therefore about $0.0041$.  Because the null hypothesis
is composite, determining the p-value of this result is a bit more
tricky, since we must determine the probability of observing a
likelihood ratio at least this extreme under the null hypothesis for
the parameter values ${\boldsymbol \theta}^{(0)}$ that \emph{maximize}
this probability.
That is, \[\text{p-value} = \sup_{{\boldsymbol \theta}^{(0)}} \P_{0}\left\{ \frac{\sup_{\hat{\boldsymbol \theta}^{(0)}}\P_{0}(A^1,\ldots,A^{14}|\hat{\boldsymbol \theta}^{(0)})}{\sup_{\hat{\boldsymbol \theta}^{(1)}}\P_{1}(A^1,\ldots,A^{14}|\hat{\boldsymbol \theta}^{(1)})} \leq 0.0041 \Bigg| {\boldsymbol \theta}^{(0)} \right\}.\]
In general this seems not to be tractable to analytic solution, so we
employ a genetic algorithm to perform the unconstrained optimization,
and approximate the probability for each parameter vector by sampling.
That is, for each parameter vector ${\boldsymbol \theta}^{(0)}$ (for
the null hypothesis) in the GA's population on each iteration, we
sample a large set of sequences from the joint distribution. For each
sequence, we compute the MLE under the null hypothesis and the MLE
under the alternative hypothesis, and then calculate the likelihood
ratio and compare it to the observed ratio.  We calculate the
empirical frequency with which the likelihood ratio is at most
$0.0041$ in the set of sampled sequences for each vector ${\boldsymbol
\theta}^{(0)}$, and use this as the objective function value in the
genetic algorithm.  Mutations consist of adding Gaussian noise (with
variance decreasing on each iteration), and recombination is performed
as usual.  Full details of the algorithm are omitted for brevity (see
\citep{mitchell:96} for an introduction to GAs).  The resulting
approximate p-value we obtain by this optimization procedure is 0.024.

This model is nice in that, because it has the form~\eqref{eqn:factor},
we can compute the likelihoods and derivatives thereof analytically.  
In particular, in models of this form, we can compute
likelihoods and perform Newton-Raphson optimization directly, without
the need of sampling-based approximations.  However, in general this
might not be the case.  For situations in which one cannot tractably
compute the likelihoods, an alternative possibility is to use bounds
on the likelihoods.  Specifically, one can obtain an upper bound on
the likelihood ratio statistic by dividing an upper bound on the null
likelihood by a lower bound on the alternative likelihood.  When
computing the p-value, one can use a lower bound on the ratio by
dividing a lower bound on the null likelihood by an upper bound on the
alternative likelihood.  See \citep{opper:01,wainwright:05} for
examples of how such bounds on the likelihood can be tractably
attained, even for intractable models.

In practice, the problem of formulating an appropriate model to encode
one's hypothesis is not well-posed.  One general approach which seems
intuitively appealing is to write down the types of motifs or patterns
one expects to find in the data, and then specify various other
patterns which one believes those motifs could likely transition to
(or would likely \emph{not} transition to) under the alternative
hypothesis.  For example, perhaps one believes that densely connected
regions of the network will tend to become more dense and clique-like
over time, so that one might want to write down a potential
representing the transition of, say, k-cliques to more densely
connected structures.

\section{Classification: A Case Study}

One can additionally consider using these temporal models for
classification.  \\Specifically, consider a transductive learning
problem in which each actor has a static class label, but the learning
algorithm is only allowed to observe the labels of some random subset
of the population.  The question is then how to use the known label
information, in conjunction with observations of the network evolving
over time, to accurately infer the labels of the remaining actors
whose labels are unknown.

As an example of this type of application, consider the alternative
hypothesis model from the previous section (model 1), in which each
Senator has a class label (party affiliation).  We can slightly modify
the model so that the party labels are no longer constant, but random
variables drawn independently from a known multinomial distribution.
Assume we know the party affiliations of a randomly chosen 50
Senators.  This leaves 50 Senators with unknown affiliations.  If we
knew the parameters $\boldsymbol \theta$, we could predict these 50
labels by sampling from the posterior distribution and taking the mode
for each label.  However, since \emph{both} the parameters \emph{and}
the 50 labels are unknown, this is not possible.  Instead, we can
perform Expectation Maximization to \emph{jointly} infer the maximum
likelihood estimator $\hat{\boldsymbol \theta}$ for $\boldsymbol
\theta$ \emph{and} the posterior mode given $\hat{\boldsymbol \theta}$.

Specifically, let us assume the two class labels are \emph{Democrat}
and \emph{Republican}, and we model these labels as independent
Bernoulli(0.5) random variables.  The distribution on the network
sequence given that all 100 labels are fully observed is the same as
given in the previous section.  Since one can compute likelihoods in
this model, sampling from the posterior distribution of labels given
the network sequence is straightforward using Gibbs sampling.  We can
therefore employ a combination of MCEM and Generalized EM algorithms
(call it MCGEM)
\citep{mclachlan:97} with this model to infer the party labels as follows.
In each iteration of the algorithm, we sample from the posterior
distribution of the unknown class labels under the current parameter
estimates given the observed networks and known labels, approximate
the expectation of the gradient and Hessian of the log likelihood
using the samples, and then perform a single Newton-Raphson update
using these approximations.

We run this algorithm on the 108$^{th}$ Senate data from the previous
section.
We randomly select 50 Senators whose labels are observable,
and take the remaining Senators as having unknown labels.
As mentioned above, we assume all Senators are either
Democrat or Republican; Senator Jeffords, the only independent
Senator, is considered a Democrat in this model.
We run the MCGEM algorithm described above to infer the maximum
likelihood estimator $\hat{\boldsymbol \theta}$ for ${\boldsymbol
\theta}$, and then sample from the posterior distribution over the 50
unknown labels under that maximum likelihood distribution, and take
the sample mode for each label to make a prediction.

The predictions of this algorithm are correct on 70\%
of the 50 Senators with unknown labels.
Additionally, it is interesting to note that the parameter values
the algorithm outputs
$(\hat{\theta}_S = 336.0, \hat{\theta}_{WD} = -59.7, \hat{\theta}_{BD} = -96.0, \hat{\theta}_{WR} = 3.8, \hat{\theta}_{BR} = 0.28)$
are very close (Euclidean distance $2.0$) to the maximum likelihood
estimator obtained in the previous section (where all class labels were known).
Compare the above accuracy score with a baseline predictor
that always predicts Democrat, which would get 52\% correct for this train/test split,
indicating that this statistical model of network evolution provides at least a somewhat
reasonable learning bias.  However, there is clearly room for improvement in the model
specification, and it is not clear whether modeling the evolution of the graph is actually of any benefit
for this particular example.  For example, after collapsing this sequence of networks into
a single weighted graph with edge weights equal to the sum of edge weights over
all graphs in the sequence, running Thorsten Joachims' Spectral Graph Transducer algorithm \citep{joachims:03}
gives a 90\% prediction accuracy on the Senators with unknown labels.
These results are summarized in Table~\ref{tbl:class}.
Further investigation is needed into what types of problems can benefit from explicitly modeling
the network evolution, and what types of models are most appropriate for basing a learning bias on.

\begin{table}[h]
\begin{center}
\begin{tabular}{|l|r|}
\hline
Method & Accuracy \\ \hline
Baseline & 52\% \\ \hline
Temporal Model & 70\% \\ \hline
SGT & 90\% \\ \hline
\end{tabular}
\caption{Summary of classification results.}\label{tbl:class}
\end{center}
\end{table}

\section{Assessing Statistic Importance and Quality of Fit: A Case Study}

In this section, we use TERGMs to model the network transitions of the 
108$^{th}$ U.S. Senate network, described in Section~\ref{sec:testing}. 
The dynamic network has 100 nodes and 12 time points\footnote{We have removed the first two time points 
from the original 14-step series of Section~\ref{sec:testing}, 
due to outlier behavior in the initial two time points.  This behavior is explained by 
an initial surge in activity when the Senate reconvenes after a vacation, but is not
part of the usual ``stationary'' behavior, so we chose to exclude it when evaluating 
the quality of fit.}.
We perform two types of experiments here: the first is simply to assess which 
statistics are important for modeling the network transitions, by observing the 
magnitudes of the estimated parameters, and the second assesses the quality of 
fit of a model with a cross-validation experiment.

We start with including three statistics: \emph{Density},
\emph{Stability}, and \emph{Reciprocity}.  The estimated
parameters are plotted in Figure~\ref{fig:theta_3ftr}. We have estimated 11
sets of model parameters, so each box plot in a subplot contains 11
values. Judging by the magnitudes of the weights, we can see
that \emph{Density} and \emph{Stability} play big roles, whereas
\emph{Reciprocity} plays a minor role in this case.

\begin{figure}[tbh!]
\centerline{\includegraphics[scale=0.6]{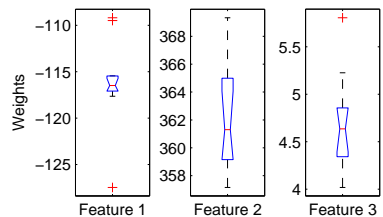}}
\caption{Estimated parameter values (weights) for a TERGM with 3 statistics (features).}
\label{fig:theta_3ftr}
\end{figure}

The 3-statistic TERGM model is so simple that we would not expect it to
have enough modeling power. Therefore, we introduce some 3-node
statistics \footnote{We call \emph{Density}, \emph{Stability}, and
\emph{Reciprocity} 2-node features because their decomposed forms only
involve two nodes.} and expand to include 7 statistics in the model. The 4
new statistics are \emph{Transitivity}, \emph{Reverse-Transitivity},
\emph{Co-Supported}, and \emph{Co-Supporting}, which are illustrated
in Figure~\ref{fig:3NodeFeatures}. \emph{Transitivity} has been
explained in earlier sections. \emph{Reverse-Transitivity} means that if
person B supports person C and person C supports person A at time
$(t-1)$, then it is likely that person A will support person B at time
$t$. \emph{Co-Supported} says that if both A and B are supported by a
third person at time $t$, then it is likely that person A will support
person B at time $t$. \emph{Co-Supporting} is defined similarly. In
all of the cases, we are looking at the influence from the previous
time point on the link from A to B at time $t$.

More formally, the new statistics are defined as
\begin{align*}
\Psi_{RT}(A^t,A^{t-1}) & = n \bigg[\sum_{ijk}A_{jk}^{t-1}A_{ki}^{t-1}A_{ij}^t\bigg] / \bigg[\sum_{ijk}A_{jk}^{t-1}A_{ki}^{t-1} \bigg] \\
\Psi_{CSd}(A^t,A^{t-1}) & = n \bigg[\sum_{ijk}A_{ki}^{t-1}A_{kj}^{t-1}A_{ij}^t\bigg] / \bigg[\sum_{ijk}A_{ki}^{t-1}A_{kj}^{t-1} \bigg] \\
\Psi_{CSg}(A^t,A^{t-1}) & = n \bigg[\sum_{ijk}A_{ik}^{t-1}A_{jk}^{t-1}A_{ij}^t\bigg] / \bigg[\sum_{ijk}A_{ik}^{t-1}A_{jk}^{t-1} \bigg]
\end{align*}.

Figure~\ref{fig:theta_7ftr} shows the parameter values for the TERGM
model with these 7 statistics. Among the 4 new statistics,
\emph{Transitivity} and \emph{Co-Supporting} are major contributors,
while \emph{Reverse-Transitivity} and \emph{Co-Supported} are
neglectable. The effectiveness of \emph{Transitivity} is intuitive,
but the big contrast between the weights for \emph{Co-Supporting} and
for \emph{Co-Supported} could be intricate. It can be explained by the
nature of the data.
Each proposal has a single sponsor and possibly multiple
co-sponsors. Therefore, each Senator is likely to be a sponsor
(supported by others) for few proposals, while she or he could potentially
be a co-sponsor (supporter) for many more proposals. When the
\emph{Co-Supporting} case happens, it is likely that the two Senators
supported a third Senator on the same proposal, which suggests a shared
position on the issue, which could further lead to a cooperation on
another proposal at a later time. In contrast, when the
\emph{Co-Supported} situation happens, it is certain that the two
Senators are supported by a third senator on different
proposals.\footnote{Links corresponding to a proposal are pointing to
a single node, since there is only one sponsor for each proposal.} 
Although they are co-sponsored by a same Senator, these proposals can
be in very different areas, which does not necessarily suggest a common 
interest for the two sponsors.

\begin{figure}[tbh!]
\centerline{\includegraphics[scale=0.6]{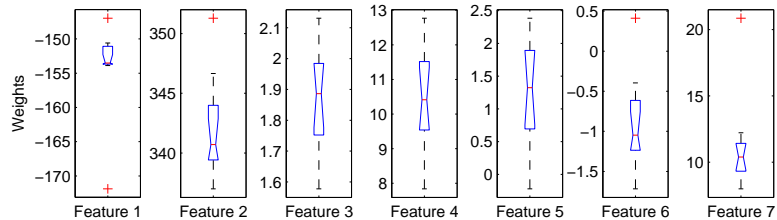}}
\caption{Estimated parameter values (weights) for a TERGM with 7 statistics (features).}
\label{fig:theta_7ftr}
\end{figure}

Next, we add two more 3-node statistics, \emph{Popularity} and
\emph{Generosity}, to the model to have a set of 9 statistics. As
illustrated in Figure~\ref{fig:3NodeFeatures}, \emph{Popularity} says
that if one has a supporter, she or he is likely to have another
supporter. \emph{Generosity} can be understood in the same
manner. More formally, they are defined as
\begin{align*}
\Psi_{P}(A^t,A^{t-1}) & = n \bigg[\sum_{ijk}A_{kj}^{t-1}A_{ij}^t\bigg] / \bigg[\sum_{ijk}A_{kj}^{t-1} \bigg] \\
\Psi_{G}(A^t,A^{t-1}) & = n \bigg[\sum_{ijk}A_{ik}^{t-1}A_{ij}^t\bigg] / \bigg[\sum_{ijk}A_{ik}^{t-1} \bigg]
\end{align*}.

\begin{figure}[tbh!]
\centerline{\includegraphics[scale=0.6]{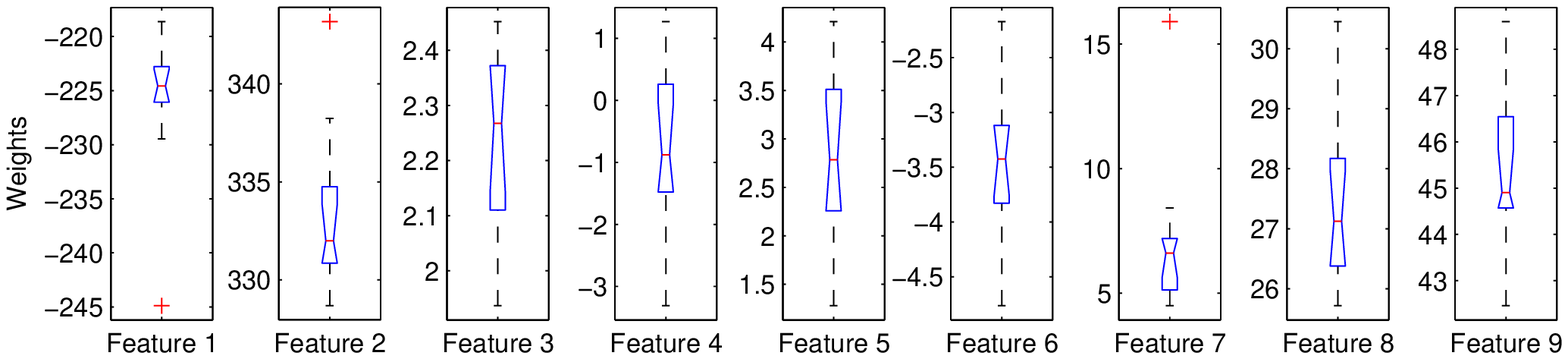}}
\caption{Estimated parameter values (weights) for a TERGM with 9 statistics (features).}
\label{fig:theta_9ftr}
\end{figure}

\begin{figure}[tbh!]
\centerline{\includegraphics[scale=0.5]{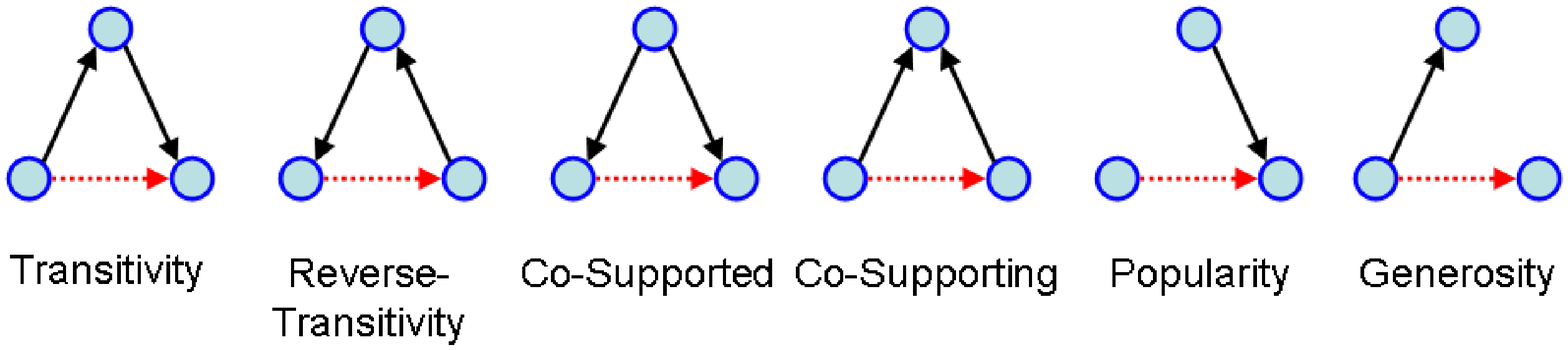}}
\caption{Graph illustrations of six 3-node statistics corresponding to Features 4$-$9 in Figure~\ref{fig:theta_7ftr} and \ref{fig:theta_9ftr}. 
Blue circles are nodes; black solid arrows represent links (or a supporting relationship) at time $(t-1)$; 
red dotted arrows represent an edge at time $t$.}
\label{fig:3NodeFeatures}
\end{figure}

Figure~\ref{fig:theta_9ftr} shows the estimated parameter values for the TERGM
with 9 statistics. Both \emph{Popularity} and \emph{Generosity} have
significant weights, while \emph{Transitivity} has a weight with low magnitude.
In some sense, the \emph{Transitivity}
statistic can be viewed as \emph{Popularity} plus \emph{Generosity} with
some constraints.  This plot indicates that the importance of \emph{Popularity} 
and \emph{Generosity} was the indirect cause of our emphasis on \emph{Transitivity} 
in the simpler models, since in a model containing these statistics, 
\emph{Transitivity} becomes irrelevant.

Next, we heuristically evaluate the quality of fit of the model using a cross-validation style
experiment, as follows.
For each of the time points $t$ (except $t=1$), we estimate a set of parameters
for a TERGM to fit all the observed transitions except the transition from time
point $t-1$ to time point $t$. We then sample 
a number of networks from the conditional distribution over the network at 
time $t$ given the observed network at time $t-1$, under the estimated parameters.
Finally, we compare the $\Psi(A^t,A^{t-1})$ statistic values from the sampled $A^t$ networks
to the $\Psi(A^t,A^{t-1})$ statistic values from the true $A^t$ network.
We repeat this entire process for each $t > 1$ to generate our plots.
The results of this comparison reflect relatively how well TERGMs model the
transitions, given that we are committed to a Markov assumption for the transitions.

Figure~\ref{fig:senate_n3} presents the comparison of statistic values
between ground-truth and sampled networks from the estimated TERGMs from
the cross-validation process described above.  For a few statistics, 
the blue lines lie within the green lines (i.e., in the range of sampled networks) for
most time points, which means that the model does a fairly good job of
predicting the change in statistic values: for example, \emph{Reciprocity}, \emph{Reverse-Transitivity},
\emph{Co-Supported}, \emph{Co-Supporting}, and \emph{Popularity}. It
is worth noting that we can even capture some sharp changes with these models:
for instance, \emph{Reverse-Transitivity}, \emph{Co-Supporting}, and \emph{Popularity} at time point 7 
(the last of these is particularly dramatic). This is somewhat surprising, 
as it intuitively seems like sharp changes might be quite difficult to predict 
with a Markov assumption and such simple statistics.  On the other hand, not all 
statistics are predicted well, such as \emph{Stability}, where the predictions are quite poor;
as such, there is clearly room for improvement in the design of statistics to more accurately 
model time-evolving networks.

\begin{figure}[tbh!]
\def\scA{0.28}
\centerline{\includegraphics[scale=\scA]{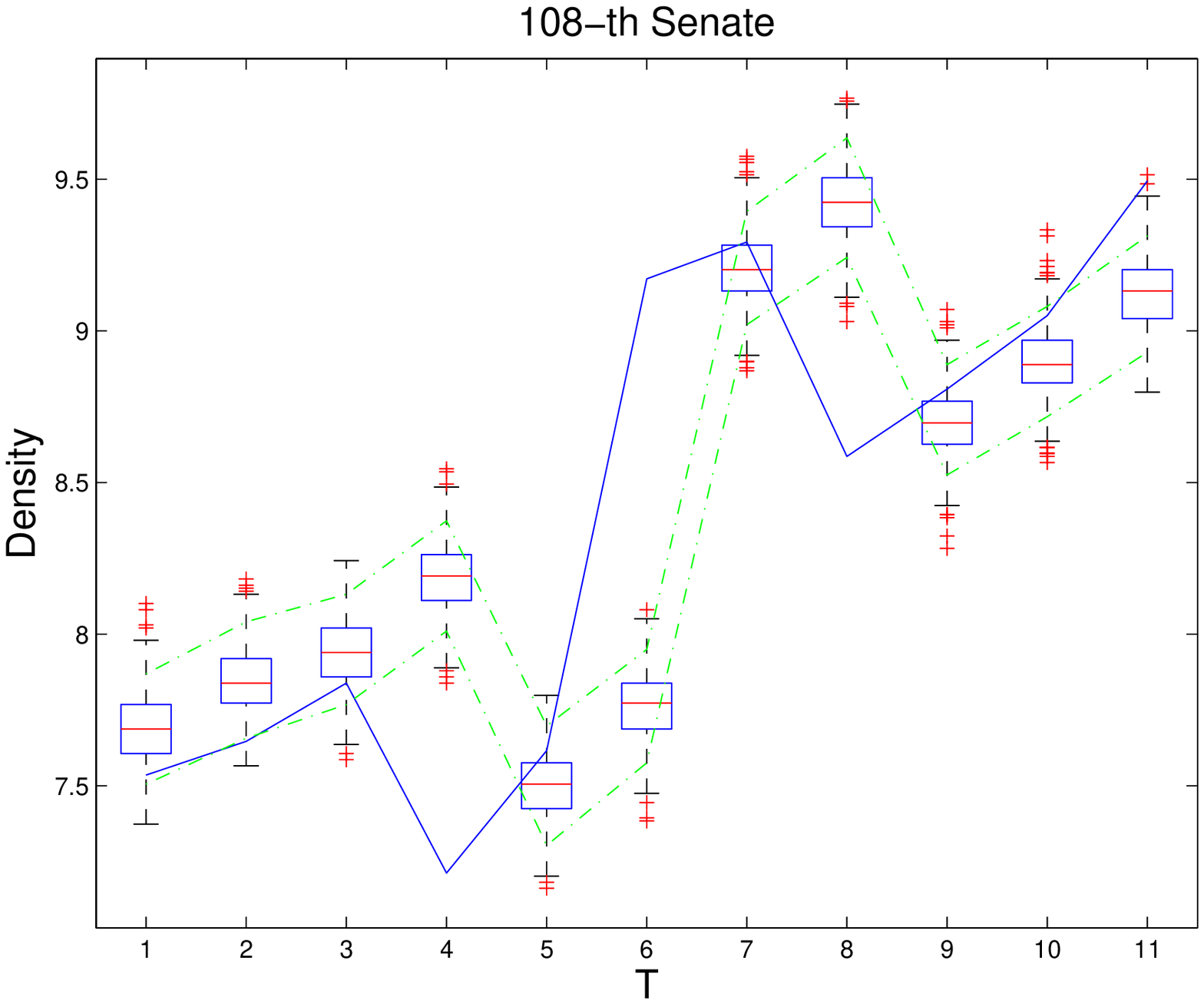}
\includegraphics[scale=\scA]{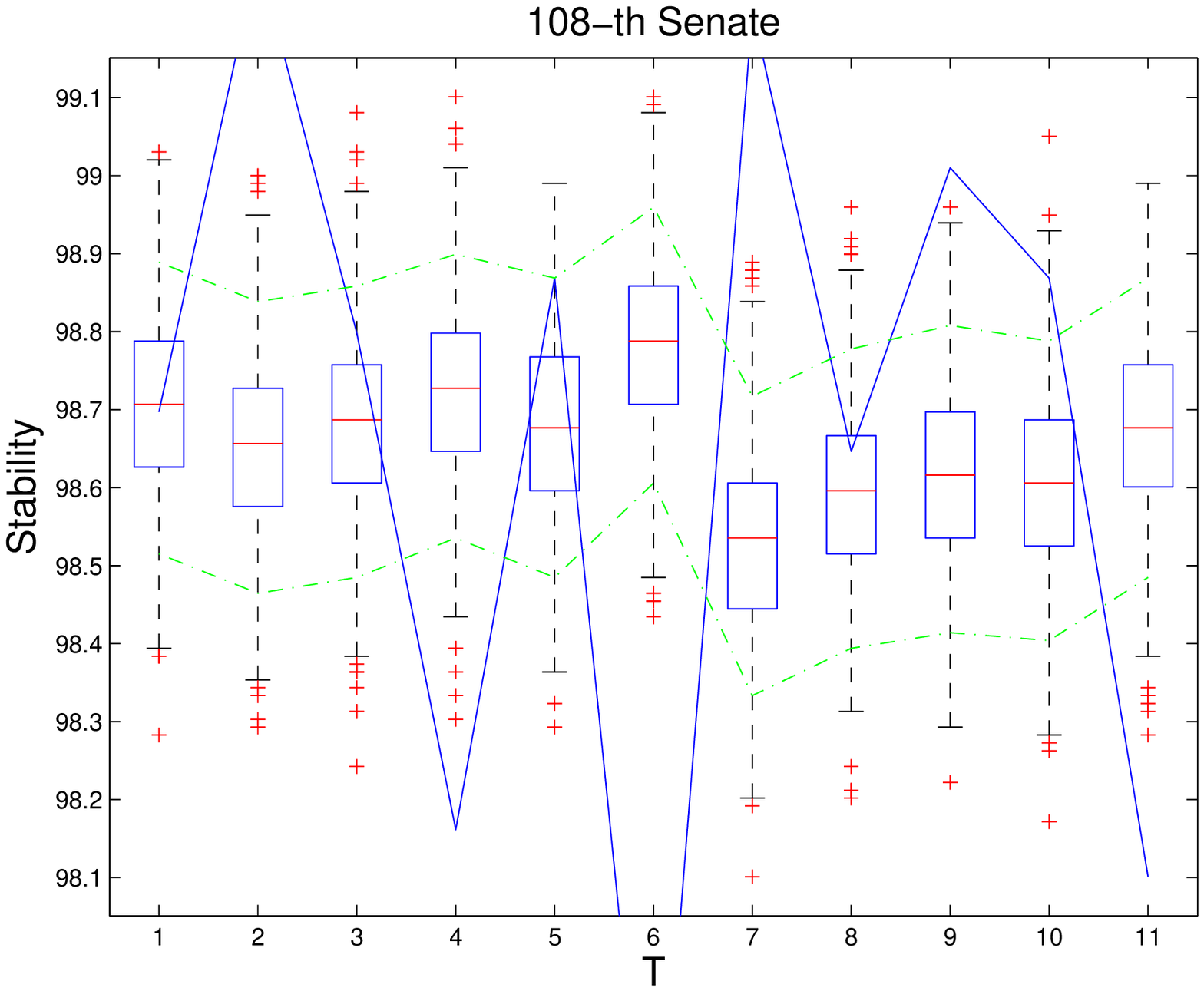}
\includegraphics[scale=\scA]{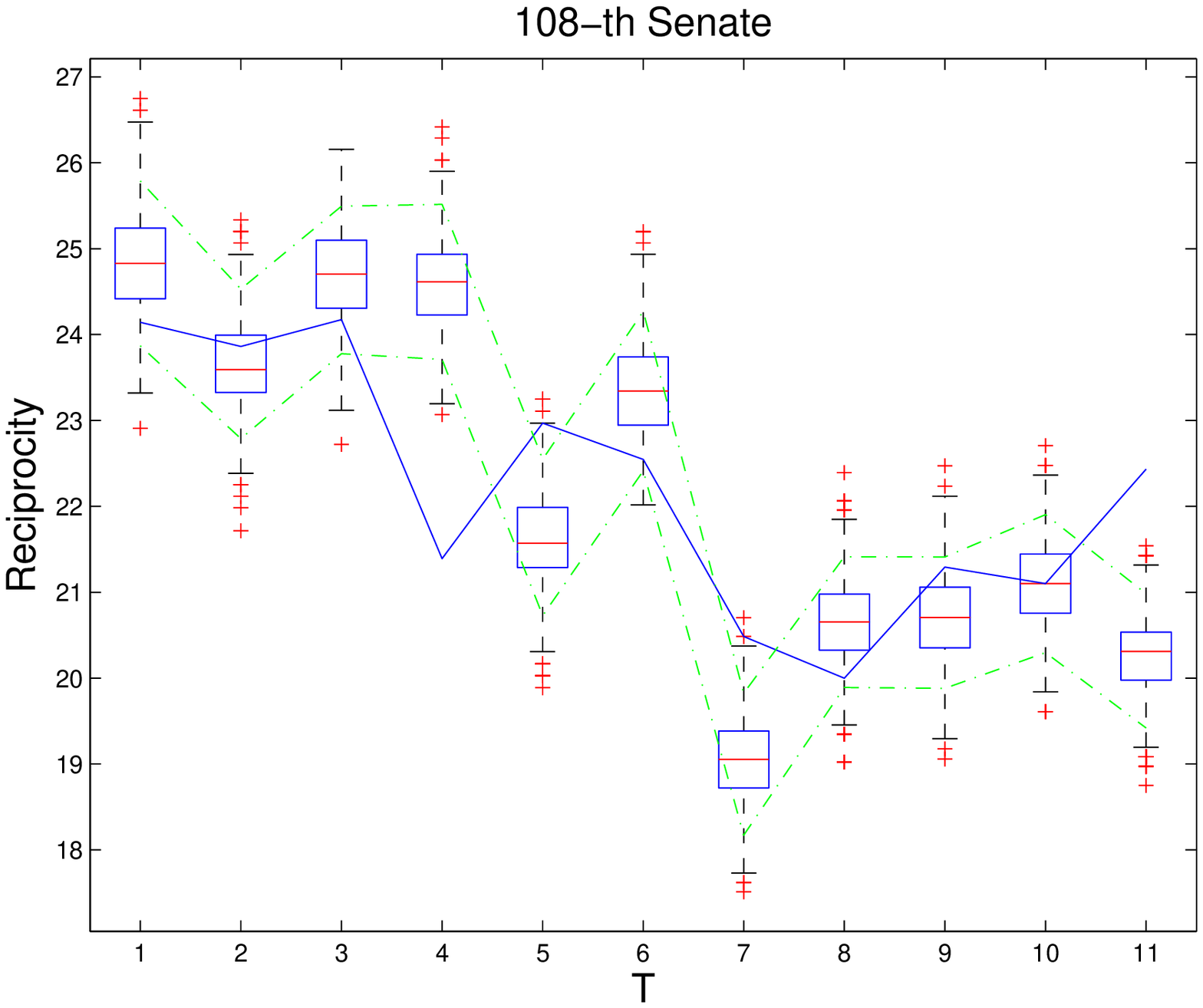}}
\centerline{\includegraphics[scale=\scA]{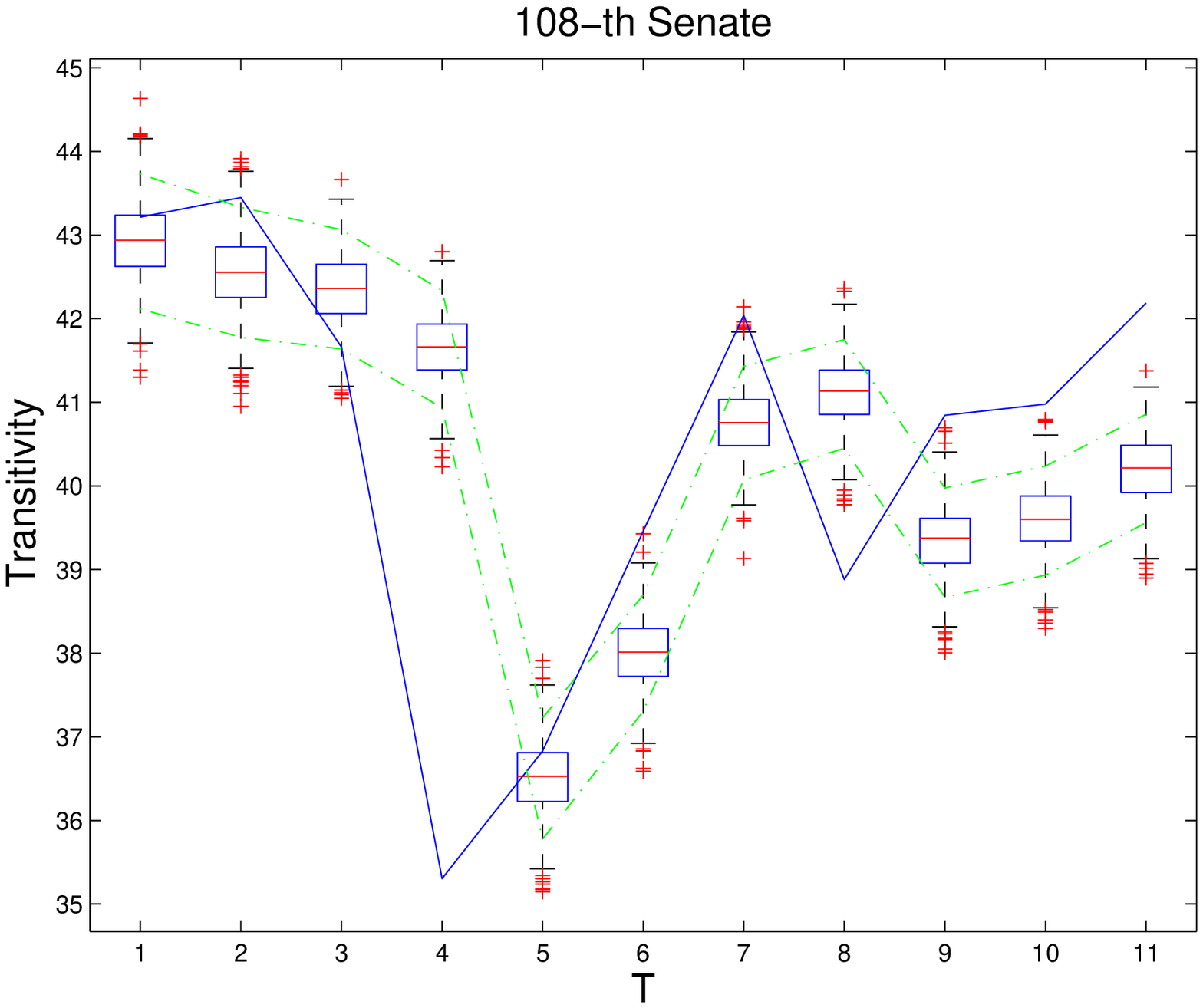}
\includegraphics[scale=\scA]{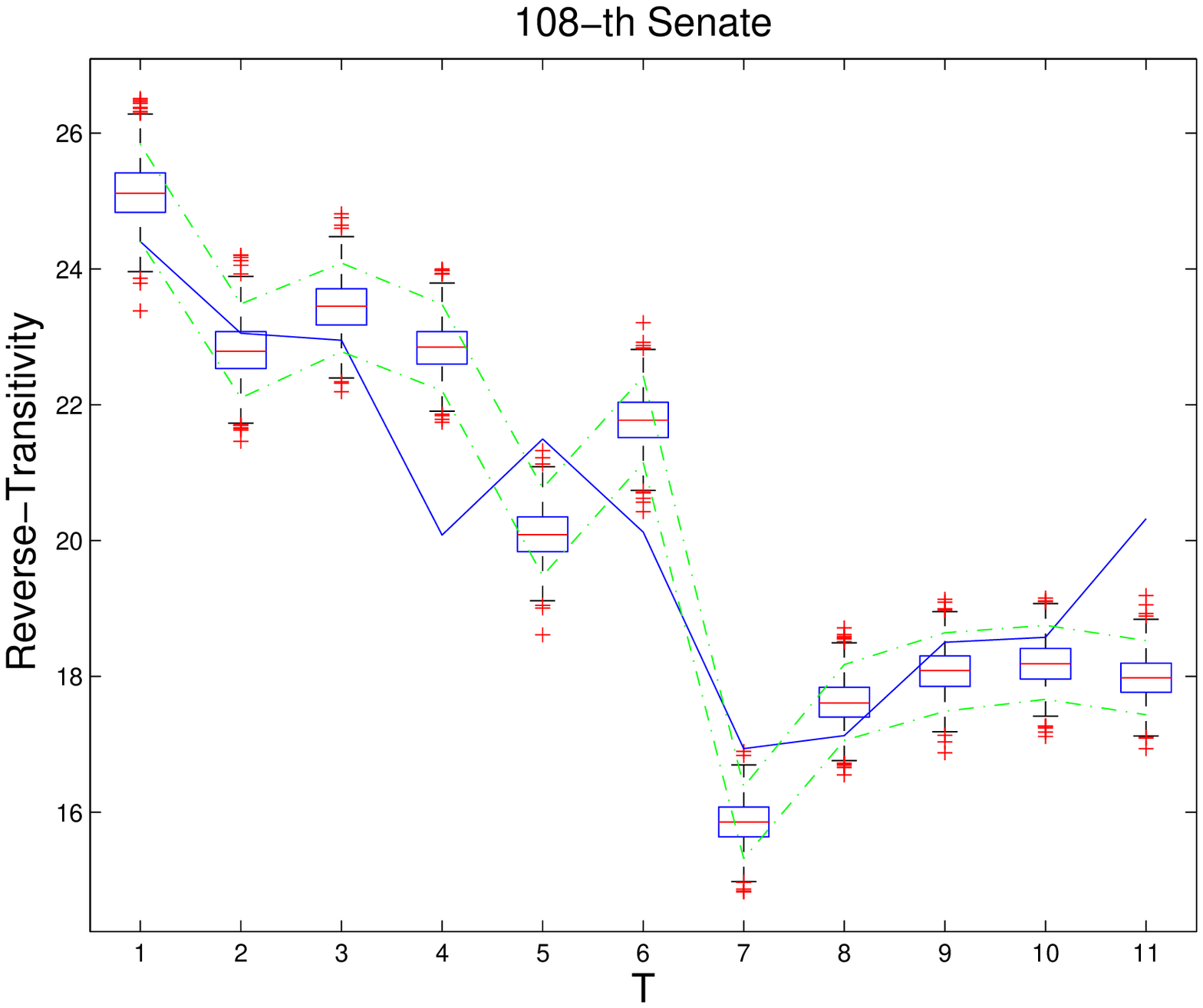}
\includegraphics[scale=\scA]{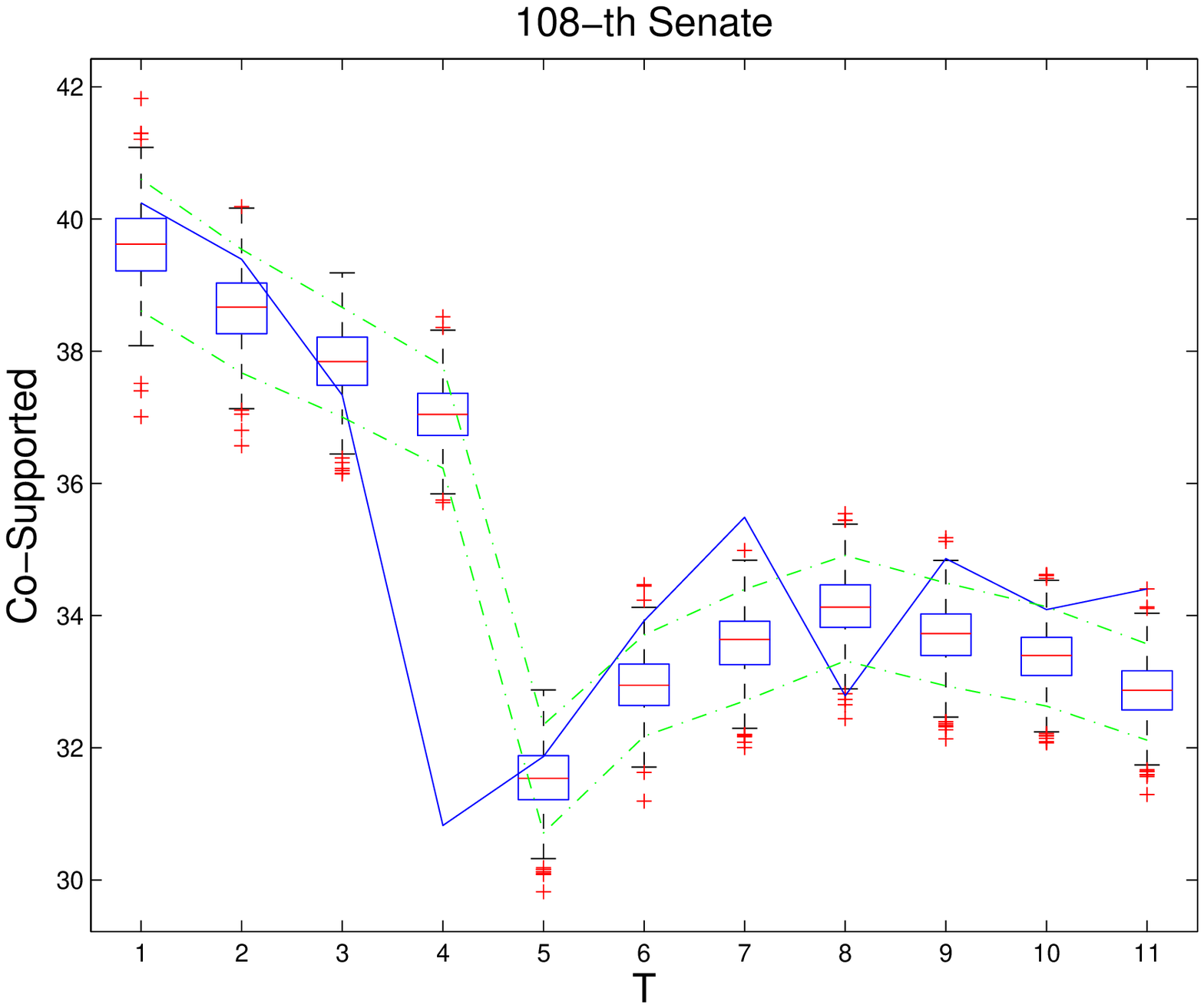}}
\centerline{\includegraphics[scale=\scA]{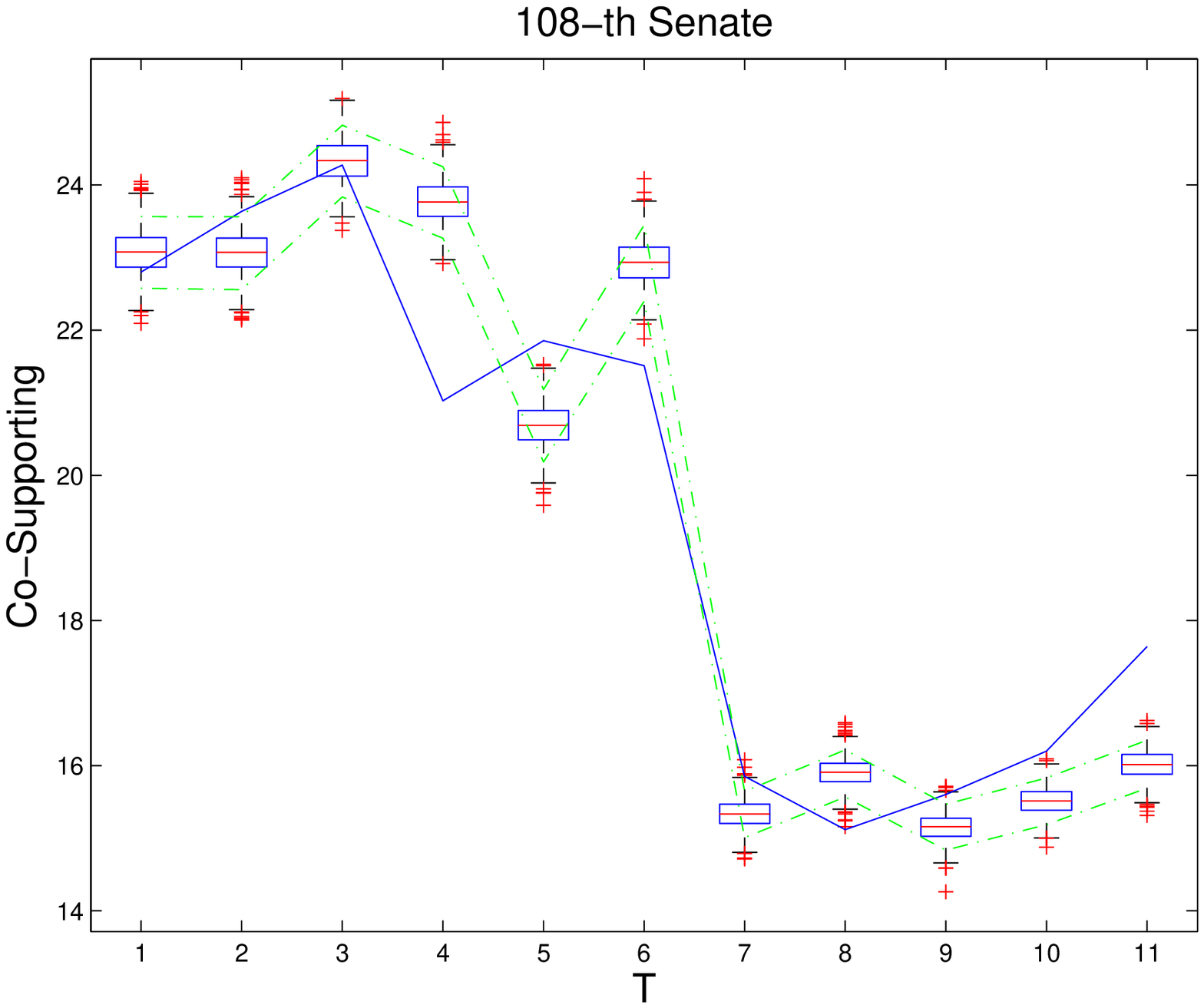}
\includegraphics[scale=\scA]{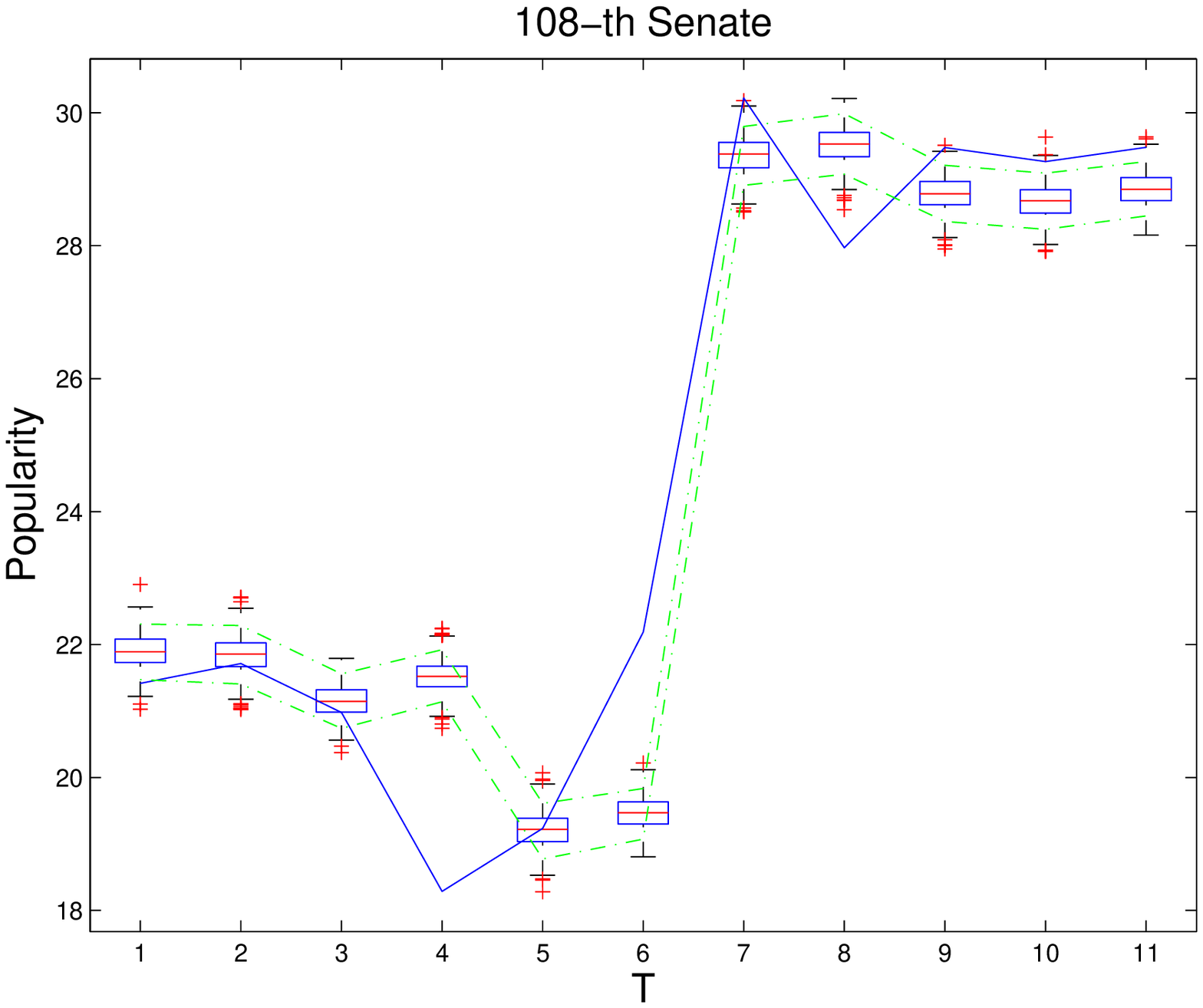}
\includegraphics[scale=\scA]{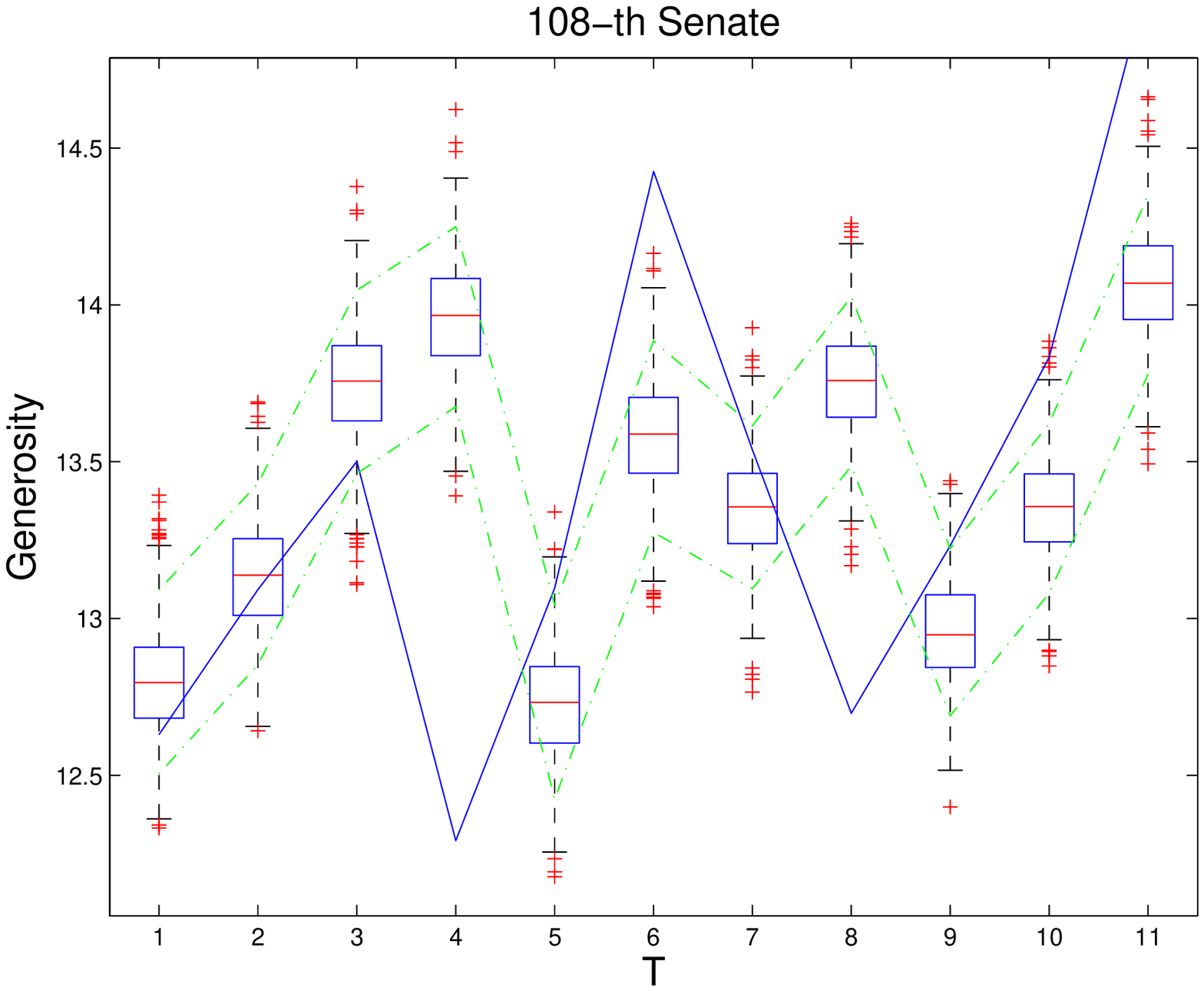}}
\caption{Statistic values of real networks and sampled networks based on a TERGM with 9 statistics. 
The comparisons are grouped by statistic. Blue solid lines indicate the observed (true) network statistics. 
Box plots are for the sampled networks (in the described cross-validation experiments) and green dotted lines indicate 5- and 95-percentiles.}
\label{fig:senate_n3}
\end{figure}

\section{Future Work}

If we think of this type of model as describing a process giving rise
to the networks one observes in reality, then one can think of a
single network observation as a snapshot of this Markov chain at that
time point.  Traditionally one would model a network at a single time
point using an ERGM.  However, in light of the degeneracy issues found
in ERGMs, and the lack thereof for the temporal models with product
conditional distributions, it seems worthwhile to investigate modeling
a single network as the end-point of an unobservable sequence.
Directly modeling this with latent variables would seem to make
inference computationally difficult.  However, it may be possible to
indirectly model this by studying the stationary distribution of these
Markov chains.  To our knowledge, it remains an open problem to
directly specify the family of stationary distributions that a given
TERGM corresponds to.

Moving forward, we hope to move beyond these ERG-inspired models
toward models that incorporate latent variables, which may also evolve
over time with the network.  For example, it may often be the case
that the phenomena represented in data can most easily be described by
imagining the existence of unobserved groups or factions, which form,
dissolve, merge and split as time progresses.  The flexibility of the
ERG models and the above temporal extensions allows a social scientist
to ``plug in'' his or her knowledge into the formulation of
the model, while still providing general-purpose estimation algorithms
to find the right trade-offs between competing and complementary
factors in the model.  We would like to retain this flexibility in
formulating a general family of models that include evolving latent variables
in the representation, so that the researcher can ``plug in'' his or her
hypotheses about latent group dynamics, evolution of unobservable
actor attributes, or a range of other possible phenomena into the
model representation.  At the same time, we would like to preserve the
ability to provide a ``black box'' inference algorithm to determine the
parameter and variable values of interest to the researcher, as can be
done with ERGMs and now TERGMs.

\subsection*{Acknowledgments}
We thank Stephen Fienberg and all participants in the statistical network modeling discussion group at Carnegie Mellon for helpful comments and feedback.

\bibliography{learning}                

\end{document}